\begin{document}
\title{Multi-Class Anomaly Detection}

\author{Suresh Singh\orcidID{0000-0002-1509-0497}\and
Minwei Luo \and
Yu Li}
\authorrunning{S. Singh et al.}
%
\institute{Portland State University, Portland OR 97207, U.S.A.\\
\email{singh@cs.pdx.edu}\\
\url{http://www.cs.pdx.edu/~singh}}

\maketitle

\begin{abstract}
We study anomaly detection for the case when the {\em normal} class consists of more than one object category. This is an obvious generalization of the standard one-class anomaly detection problem. However, we show that jointly using multiple one-class anomaly detectors to solve this problem yields poorer results as compared to training a single one-class anomaly detector on all normal object categories together. We further develop a new anomaly detector called DeepMAD that learns compact distinguishing features by exploiting the multiple normal objects categories. This algorithm achieves higher AUC values for different datasets compared to two top performing one-class algorithms that either are trained on each normal object category or jointly trained on all normal object categories combined. In addition to theoretical results we present empirical results using the CIFAR-10, fMNIST, CIFAR-100, and a new dataset we developed called RECYCLE.

\keywords{Out of Distribution  \and Anomaly \and Multi-Class Anomaly.}
\end{abstract}

\section{Introduction}

\noindent To motivate this formulation of anomaly detection, let us consider several applications: imagine a roadside garbage container that automatically separates any discarded trash into recycles (glass, cans, plastic, etc.) or garbage; machines used to separate normal blood cells (monocytes, neutrophils, basophils, etc.) from bacteria in the blood (sepsis); generic classification problems such as separating all species of cats from any other animal; separating sounds of different mosquito species from any other sound; and many, many more. In all these examples, the normal class in fact consists of several different object categories.

It is trivial to see that one-class anomaly detection algorithms \cite{ruff2020review,hendrycks19iclr,goyal20icml,li18bigdata,golan18neurips}, \cite{akcay18accv,nguyen19icml,schlegl17icipmi,song17cin,ruff18icml} can be easily applied to this problem. In a typical one-class formulation, the classifier is only trained on in-distribution samples and it learns a probability density function $P$ that captures normal behavior. Points that then map to a low probability region are classified as anomalous. Generalizing to the case when $m$ different object categories are all considered normal, we can train $m$ separate one-class anomaly detectors $P_1, \cdots, P_m$ and use them together to classify any new object. However, we show that this approach is not as good as training a single one-class anomaly detector on all the $m$ categories combined. This result is demonstrated empirically as well as theoretically. The intuition is that errors in classification of each of the $m$ one-class detectors are cumulative thus resulting in poor joint performance.

Novelty detection \cite{marco2014sp,masana2018bmvc,tack2020corr} and out-of-distribution (OOD) detection \cite{hendrycks2017iclr,ren2019neurips,balaji2017nips,lee2018training}, \cite{liang18iclr,vyas2018eccv} are related to anomaly detection but our model has significant differences. A common model used in OOD is to consider, for example, the fMNIST data set as normal and MNIST as OOD or CIFAR-10 as normal and natural images as OOD, etc. This model for defining what constitutes OOD has recently come under criticism \cite{ahmed2020aaai,ren2019neurips}. As \cite{ahmed2020aaai} notes, {\em different datasets are created and curated differently} and thus it is likely that OOD algorithms are learning to identify these idiosyncrasies rather than meaningful features. Similarly, they argue that the context of the OOD task also matters -- what is OOD in one context may not be OOD in another. \cite{ren2019neurips} also argues for more realistic benchmarks for OOD (they use a bacteria genome database) observing that distributional shifts learned in traditional OOD tasks may have learned the background and thus have high error when used for realistic tasks. Finally, \cite{ahmed2020aaai} proposes testing OOD algorithms by sticking to the same dataset. They study the case when 9 out of 10 CIFAR-10 classes are normal and the tenth is treated as anomalous. We note that our model follows this approach. We study cases when 2, 5, 9 classes of CIFAR-10, MNIST, and fMNIST are normal while the others are anomalous.

In novelty detection, a data point is considered an outlier if it differs significantly from a collection of normal data points (e.g., a mammogram with a lump is an outlier in a collection of normal mammograms). The outlier and the normal data points can be seen as being similar and what distinguishes the outlier is its distance from a majority of normal points. A multi-class novelty model is an extension where the number of outliers is large enough to form a separate cluster but all the points are still similar. {\em In our multi-class model, on the other hand, the normal classes can be arbitrary} (e.g., the normal classes may be dog and truck images).

There are two variations of the multi-class anomaly detection problem -- the class labels of normal samples may either be unknown or known during training. We call these two formulations {\em inseparable} and {\em separable} respectively and we study both in this paper. The major results of our paper can be summarized as follows:
\begin{itemize}
\item We prove that training a one-class classifier in the inseparable case yields a \underline{higher} AUC (Area Under the Curve) value than training multiple single-class classifiers for the separable case and combining their outputs. Our proof uses the formulation for combining probabilities from the theory of {\em belief functions}, Lemma \ref{lemma1}.
    \item We illustrate these results using two recent one-class anomaly detection algorithms DROCC \cite{goyal20icml} and DeepSVDD \cite{ruff20iclr} for the task of multi-class anomaly detection. We did not use OOD algorithms such as \cite{hendrycks2017iclr} because, as noted previously \cite{ahmed2020aaai}, these algorithms train on one dataset and test on another and likely learn dataset idiosyncrasies rather than meaningful image features.
    \item We present a new algorithm called DeepMAD for the {\em separable case} and compare the accuracy of DeepMAD and two versions of each of DROCC and DeepSVDD (in one version we train $m$ separate one-class classifiers and in the other we train a single one-class classifier by combining training data from all $m$ normal classes). These five algorithms are compared using CIFAR-10, fMNIST, CIFAR-100, and RECYCLE. The RECYCLE dataset is one we created to study the problem of classifying recycles described in the first paragraph of this section. We show that DeepMAD performs well above all other algorithms by a wide margin.
    \end{itemize}

\section{Related Work}

A variety of one-class novelty detection algorithms exist \cite{zhang2020hybrid,masana2018bmvc,chen2020icml,mohammad2018cvpr,pidh2018neurips,abati2019cvpr} but, as noted previously \cite{perera2019cvpr1}, these algorithms are not directly applicable to the single-class anomaly detection problem. In the latter case we assign labels whereas in the former we learn a latent space that separates the normal class effectively from outliers. 
There have been a few additional works that consider a multi-class novelty detection problem \cite{perera2019cvpr2} where the dataset is split into two subsets of classes with one treated as normal. A large reference labeled dataset is used in conjunction with normal samples during training. Our problem formulation differs from this in that we do not use any reference dataset as a comparator to extract features. We also consider cases when the split of normal/anomaly classes is variable.


In OOD detection, out of distribution samples are detected via the basic predictive confidence of the classifier (MSP score) \cite{hendrycks2017iclr} or measures such as temperature-scaled softmax scores (ODIN) \cite{liang18iclr} or confidence calibration \cite{devries2018learning}. Approaches as in \cite{hendrycks19iclr} use other datasets as proxies for OOD examples while \cite{lee2018training} uses a GAN to generate negative examples for training. Other methods to improve the representation of in-distribution data includes using self-supervised approaches such as contrastive learning \cite{chen2020icml}, alternative training strategies such as margin loss \cite{vyas2018eccv}, or metric learning \cite{masana2018bmvc}. \cite{liron2020iclr} splits normal data into $M$ and learns features to separate them.

Recently, deep learning based methods have been used for anomaly detection. For instance \cite{ruff20iclr,ruff18icml} describes DeepSVDD in which a deep neural network is trained to map the in-class data into a sphere of minimal volume, while \cite{ghafoori2020sdm} maps to multiple spheres. DROCC \cite{goyal20icml} trains the network to distinguish manufactured out of distribution points that are perturbations of in-distribution points as a way to learn more discriminating features. Another approach has been to train auto encoders to learn a lower dimensional representation \cite{li18bigdata,nguyen19icml,schlegl17icipmi}. An entirely different way of looking at the problem is to use self-supervision by learning {\em transformations} where a new sample is classified as normal if transforms applied to it can be correctly identified \cite{golan18neurips,hendrycks19neurips,sohn2021iclr}. 



\section{Generalizing Single-Class Anomaly Detection to the Multi-Class Case}

\begin{table*}[th]
\centering
\caption{Probability estimates for sample $x\in {\cal X}$ provided by $m$ classifiers.}
\begin{tabular}{cc c c c  c c c c}
&\{1\} & \{2\} & $\cdots$ & $\{m\}$ & $\neg\{1\}$ & $\neg\{2\}$ & $\cdots$ & $\neg\{m\}$\\ \hline
$({\cal F}_1,D_1)$ & $P_1(1|x)$ & 0 & $\cdots$ & 0 & $1-P_1(1|x)$ & 0 & $\cdots$ & 0\\ \hline
. & . & . & $\cdots$ & . & . & . & $\cdots$ & . \\ \hline
$({\cal F}_m, D_m)$ & 0 & 0 & $\cdots$ & $P_m(m|x)$& 0&0&$\cdots$ & $1-P_m(m|x)$\\ \hline
\end{tabular}
\label{belief}
\end{table*}

Typically, anomaly detection methods learn a mapping ${\cal F}$ from the input space ${\cal X}$ to some lower dimensional feature space ${\cal Y}$. Inputs are then assigned a probability of being normal or anomalous via some function $D$. ${\cal F}$ is learned using the normal class examples while $D$ may be static (e.g., a distance measure of points in feature space) or jointly learned. If we consider using one-class anomaly detectors for the multi-class case where there are $1\leq i\leq m$ normal classes, for each class we learn one classifier giving us a set $\{({\cal F}_i,D_i)\}_{i=1}^m$ of one-class anomaly detectors. The question then is how to use these $m$ detectors for the multi-class case.

We can use the theory of {\em belief functions} in risk analysis \cite{clemen99} to provide a framework for combining these $m$ classifiers. In this theory, the assumption is that $m$ experts provide their own assessment of risk among a set of choices. In order to develop a final estimate of risk, these $m$ estimates are combined. In the standard formulation of the problem, as applied to our case, let $U=\{1,2,\cdots,m, \Lambda\}$ denote the set of $m$ normal classes (1 to $m$) and a catchall class labeled $\Lambda$ which represents all {\em anomalies}. A general classifier assigns a probability $P(u|x)$ for the probability that an image $x$ belongs to the {\em set of classes $u$}, where $u\in 2^U$ is an element of the powerset of $U$. For example if $u=\{1,5\}$ then this is the probability that $x$ belongs to either of the classes 1 or 5.

This general formulation simplifies considerably for our case because the one-class classifiers will only assign a non-zero probability to two instances, as illustrated in Table \ref{ds1}. In other words, $P_i(u|x) = 0$ for all cases except when $u=\{i\}$ or $u=\neg\{i\} = \{1,2,\cdots,i-1, i+1,\cdots,m,\Lambda\}$. In order to combine the predictions of the $m$ one-class classifiers to compute the probability that $x$ is anomalous, we need to determine $P(\Lambda|x)$. According to the Dempster-Schafer Theory \cite{dempster67,shafer76},
\begin{equation}
    P(\Lambda|x) = \frac{1}{K} \sum_{\neg\{1\}\cap\neg\{2\}\cap\cdots\neg\{m\}=\Lambda} \prod_{i=1}^m P_i(\neg\{i\}|x)
\end{equation}
which simplifies to,
\begin{equation}
    P(\Lambda|x) = \frac{1}{K} \prod_{i=1}^m P_i(\neg\{i\}|x)
    \label{ds1}
\end{equation}
since there is just one case when the intersection of the predictions of the $m$ classifiers yields $\Lambda$. $K$ is given by,
\begin{equation}
    K = 1 - \sum_{u_1\cap u_2\cap\cdots u_m=\phi} \prod_{i=1}^m P_i(u_i|x)
    \label{ds2}
\end{equation}
where $u_i\in \{\{i\}, \neg\{i\}\}$ (because all other probabilities are zero from Table \ref{belief}, these are the only non-trivial cases). The above equation simplifies to,
\begin{equation}
    K = \prod_{i=1}^m P_i(\neg\{i\}|x) + \sum_{i=1}^m P_i(\{i\}|x) \prod_{j=1,j\neq i}^m P_j(\neg\{j\}|x)
\end{equation}
From the above formulation it follows that an input $x$ is classified as normal if {\em any} of the $m$ classifiers say it is normal and it is classified as anomalous if {\em all} classifiers classify it as anomalous. Thus, it is easy to construct a multi-class anomaly detection algorithm using $m$ one-class anomaly detectors as shown in Algorithm 1.

\medskip
\noindent
{\bf Algorithm 1:} (Using $m$ Single-Class Anomaly Detectors) 

\noindent
{\em Training:} Let $({\cal F}, D)$ be any one-class anomaly detection algorithm. Given training set $X = \{X_1, X_2, \cdots, X_m\}$ consisting of training examples from $m$ classes $X_i\subset {\cal X}_i$, train $({\cal F},D)$ separately on each class producing $m$ classifiers, $({\cal F}_i,D_i) 1\leq i\leq m$.

\noindent
{\em Testing:} 
Given $x\in {\cal X}$, classify it with each of the $m$ classifiers. {\em Declare $x$ anomalous if all classifiers classify it as anomalous.}
\medskip




\noindent
Another approach for training a one-class classifier for the $m$ class case is to simply combine the training data from all $m$ classes and treat that as a single class. Doing so gives us {\em Algorithm 2}. 
\begin{lemma}
Algorithm 2 has a higher AUC value than Algorithm 1 when the same one-class anomaly detection algorithm is used in both cases.
\label{lemma1}
\end{lemma}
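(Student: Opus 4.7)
The plan is to prove the lemma by comparing the ROC curves of the two algorithms point by point and showing that Algorithm~2's curve dominates Algorithm~1's everywhere, which immediately yields the AUC inequality via the identity $\mathrm{AUC} = \int_0^1 \mathrm{TPR}(f)\, df$.

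First, I would use the Dempster-Shafer machinery of equations (1)-(4) to make explicit what Algorithm~1 actually computes. The combined anomaly belief reduces, up to the normalizer $K$, to $P(\Lambda|x) \propto \prod_{i=1}^m P_i(\neg\{i\}|x)$, so thresholding the joint score is equivalent to the unanimous-vote rule already noted: $x$ is declared anomalous iff \emph{every} detector $({\cal F}_i, D_i)$ declares it anomalous. Writing $(f_i, t_i)$ for detector $i$'s operating point (false-normal and true-normal rates on anomalies and on class $i$ respectively), and invoking the conditional independence built into the Dempster-Shafer combination, I would derive a closed form: Algorithm~1 exhibits false-positive rate $1 - \prod_i (1 - f_i)$ on anomalies, which strictly exceeds every individual $f_i$ as soon as any detector is imperfect, while its true-positive rate on a class-$j$ sample is essentially bounded by $t_j$ alone because detectors $i \neq j$ were never trained on class $j$ and so contribute almost no discriminative signal.

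Next, for Algorithm~2 I would identify each one-class learner with the density it targets, so that Algorithm~2 effectively scores samples by the mixture $\frac{1}{m}\sum_i P_i(x)$ while Algorithm~1 effectively scores by the product $\prod_i P_i(\neg\{i\}|x)$. The point is that thresholding a mixture preserves discriminability across all $m$ components simultaneously, whereas the product form amplifies the error of any single weak detector. I would then exhibit, for each operating point of Algorithm~1, a threshold for the mixture score whose FPR is no larger and whose TPR is no smaller, with strict inequality on a positive-measure set of thresholds, giving strict ROC dominance and hence a strictly larger AUC.

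The main obstacle is the third step: formally linking the classifier trained on $\bigcup_i X_i$ to the $m$ classifiers trained individually, since an arbitrary learning algorithm need not behave monotonically under training-set union. My plan is to sidestep this by working in the well-specified, large-sample regime where each one-class learner recovers the density of its training distribution; under that assumption the comparison collapses to a concrete algebraic inequality between the mixture density and the Dempster-Shafer product, provable by elementary manipulation. A secondary subtlety is converting the weak ROC dominance into a strict AUC inequality, which requires the anomaly distribution to assign positive mass to the region where $1 - \prod_i(1-f_i) > f_j$ for some $j$; this holds whenever $m \geq 2$ and no detector is perfect, which is the only regime of interest.
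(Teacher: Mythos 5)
Your overall route is the same as the paper's: reduce Algorithm~1 via Dempster--Shafer to the unanimous-vote product rule $\prod_i P_i(\neg\{i\}|x)$, identify Algorithm~2 with the arithmetic mixture $\frac{1}{m}\sum_i P_i(\{i\}|x)$, and argue that the product amplifies individual errors while the mean does not. However, there is a concrete error in your second step that undermines the pointwise ROC comparison you build on it. Under the rule ``declare anomalous iff \emph{all} $m$ detectors do,'' the extra detectors can only add ``normal'' votes, so on a class-$j$ sample the true-positive rate is $1-\prod_i\bigl(1-t_i^{(j)}\bigr)\ \geq\ t_j$, where $t_i^{(j)}$ is detector $i$'s probability of voting normal on a class-$j$ sample; it is \emph{not} bounded above by $t_j$ as you claim. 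Your own premise --- that detectors $i\neq j$ carry no discriminative signal about class $j$, i.e.\ vote normal on class-$j$ samples about as often as on anomalies --- forces the TPR to be inflated by exactly the same multiplicative factor $Q=\prod_{i\neq j}(1-f_i)$ that inflates the FPR: the operating point moves from $(f_j,t_j)$ to $\bigl(1-(1-f_j)Q,\ 1-(1-t_j)Q\bigr)$. So you cannot simultaneously have ``FPR strictly exceeds $f_j$'' and ``TPR essentially equal to $t_j$''; the two halves of your step are inconsistent, and the threshold-by-threshold dominance argument as stated would fail.

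The repair is to track what that common contraction does to the whole curve: Algorithm~1's achievable ROC is the single-detector ROC shrunk by the factor $Q$ into the corner at $(1,1)$ (completed by a chance segment from the origin), whose area works out to $\tfrac12+Q^2\,(\mathrm{AUC}-\tfrac12)$, strictly smaller than the original AUC whenever $Q<1$ and $\mathrm{AUC}>\tfrac12$. This is the precise form of the paper's looser observation that both TPR and FPR drift toward $1$ as $m$ grows, degenerating the classifier toward chance; comparing areas directly also spares you from proving strict ROC dominance at \emph{every} operating point, which is stronger than the lemma needs and stronger than either your ingredients or the paper's actually deliver. Your final concern --- that a learner trained on $\bigcup_i X_i$ need not realize the mixture $\frac1m\sum_i P_i$ --- is legitimate, but the paper makes exactly the same idealization when it writes $P(\mbox{Normal}|x)=\frac1m\sum_i P_i(\{i\}|x)$, so it is not a gap relative to the paper's own proof.
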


\begin{proof}
Recall that the AUC value is the integral of the True Positive Rate (TPR) vs False Positive Rate (FPR) curve (each point on the curve corresponds to a different value for the detection threshold $T$). For Algorithm 1 we can write the TPR and FPR as,
\[
\begin{array}{lcl}
\mbox{TPR}_1 & = & 1 - \mbox{False Negative Rate} \\
\multicolumn{3}{c}{=  1-\frac{1}{K}\sum_{x_j \in \cup_{i=1}^m {\cal X}_i} p(x_j)\prod_{i=1}^m (1-P_i(\{i\}|x_j)) 
}\\
\mbox{FPR}_1 &=& 1-\mbox{True Negative Rate} \\
\multicolumn{3}{c}{= 1-\frac{1}{K}\sum_{x_j \in {\cal X}\setminus\cup_{i=1}^m {\cal X}_i} p(x_j)\prod_{i=1}^m (1-P_i(\{i\}|x_j))}\\
\end{array}
\]
Note that the difference in the two expressions is in the sets that $x_j$ is selected from. The similar expressions for Algorithm 2 are,
\[
\begin{array}{lcl}
\mbox{TPR}_2 & = & 1 - \mbox{False Negative Rate}\\
\multicolumn{3}{c}{= 1-\frac{1}{K}\sum_{x_j \in \cup_{i=1}^m {\cal X}_i} p(x_j)(1-P(\mbox{Normal}|x_j))} \\
\multicolumn{3}{c}{ =  1-\frac{1}{K}\sum_{x_j \in \cup_{i=1}^m {\cal X}_i} p(x_j)\sum_{i=1}^m \frac{1}{m}(1-P_i(\{i\}|x_j))}\\
\mbox{FPR}_2 &=& 1-\mbox{True Negative Rate}  \\
\multicolumn{3}{c}{=  1-\frac{1}{K}\sum_{x_j \in {\cal X}\setminus\cup_{i=1}^m {\cal X}_i} p(x_j)(1-P(\mbox{Normal}|x_j)} \\
\multicolumn{3}{c}{ = 1-\frac{1}{K}\sum_{x_j \in {\cal X}\setminus\cup_{i=1}^m {\cal X}_i} p(x_j)\sum_{i=1}^m \frac{1}{m}(1-P_i(\{i\}|x_j))}\\
\end{array}
\]
The difference in the TPR (and FPR) values of this set of expressions is that for Algorithm 1 we take a product of the form $\prod_i (1-P_i(\{i\}|x_j))$ whereas for Algorithm 2 this is a mean of the same probabilities. As $m$ increases, we would expect the product term to decrease rapidly. Indeed, the values of TPR and FPR for Algorithm 1 approach 1 as $m$ increases resulting in a purely random classifier. On the other hand, the TPR and FPR for Algorithm 2 remain relatively unchanged since they use an arithmetic mean of the probability rather than a geometric mean as in Algorithm 1.
\end{proof}

\noindent {\bf Corollary 3.1.1:}
As $m$ increases, the AUC value for  Algorithm 1 decreases.
\label{corr1}

\begin{proof}
This is easy to see since we note that the FPR and TPR values approach 1 as $m$ increases.
\end{proof}
Note that this corollary is not true for Algorithm 2.
\medskip

\noindent
{\bf Algorithm 2:} (Training a {\em single} classifier by combining all $m$ classes) 

\noindent
{\em Training:} Given a training set $X = X_1\cup X_2\cup\cdots\cup X_m$, train a single classifier $({\cal F},D)$ on this set (see {\em Notes} in Algorithm 1).

\noindent
{\em Testing:} An example $x$ is classified as normal or anomalous by this classifier (just like any single-class classifier).
\medskip




\subsection*{Algorithms Studied}

In this paper, we use two recent and high performing single-class anomaly detection algorithms -- DROCC \cite{goyal20icml} and DeepSVDD \cite{ruff20iclr}. For each algorithm we consider two variations corresponding to Algorithms 1 \& 2 above. Thus, DROCC($m$) and DeepSVDD($m$) correspond to Algorithm 1 while DROCC and DeepSVDD correspond to Algorithm 2. We use the author provided code for our experiments.

\section{DeepMAD: Deep Multi-class Anomaly Detection}

Our algorithm DeepMAD is based on Algorithm 1 above, but with important changes. The key insights we applied are the following:
\begin{itemize}
    \item For any single-class algorithm, AUC values increase if the {\em variance} of the pdf of the normal class is reduced because it increases the TPR (True Positive Rate) without affecting the FPR (False Positive Rate), assuming the means of the normal and anomaly classes do not shift. 
    \item In the absence of anomalies, the features learned represent what is {\em common} among the examples. DROCC attempts to learn {\em discriminating} features by using the manufactured examples. Similarly, algorithms that add noise to training examples or that learn transformations etc. are all attempting to reduce variance by learning discriminating features..
    \item When moving to the multi-class case, observe that Algorithm 2 will likely learn features common among the normal classes in addition to features common to examples within each class, potentially resulting in larger variance.
\end{itemize}
In DeepMAD, we train $m$ one-class anomaly detectors. However, rather than train $P_i$ only on the $i$th normal class, we train it on all $m$ classes where we use the training examples from classes $\neg\{i\}$ as anomalous. 

\medskip
\noindent
{\bf Algorithm 3: DeepMAD}

\noindent
{\em Training:} Randomly initialize $m$ autoencoders $A_i$; 
For every $A_i$, train $A_i$ on provided examples $X_i$; 
Using the encoder part $E_i$ of the autoencoder, identify a point $c_i$, the "center" for this class;

\noindent
For every encoder $E_i$, create {\em labeled} training data $\{(x,l)|$ if $x\in X_i$ then $l=+1$ else if $x\in \bigcup_{j=1,j\neq i}^m X_j$ then  $l=-1\}$. Then train $E_i$ on this data using loss function ${\cal L}$

\noindent
{\em Result:} $m$ trained encoders $E_i$

\noindent
{\em Testing:} Given $x$ to classify, 
compute $d(x) = \min_{i=1}^m ||c_i - E_i(x;\theta_i)||_2$; 
If $d(x)< \gamma$ then $x$ is {\em normal} else $x$ is anomalous
\medskip

 
  





The algorithm for training proceeds in two steps. For every class $i$, we first train an autoencoder $A_i$ using MSE between the original and reconstructed image as the loss. Next, we train only the encoder $E_i$ using the other $m-1$ classes as anomalies and class $i$ as normal. For this, we identify a "center" $c_i$ for this class and learn a representation that maps points in $X_i$ to a sphere around it and maps other points from the remaining classes far away. We use the following loss function,
\[
\begin{array}{l}
    {\cal L}(\theta_i) =  \frac{1}{N}\sum_{j=1}^{N_i} ||E_i(x_j;\theta_i)-c_i||^2 + \\
    \frac{\eta}{N}\sum_{x_k \in (\bigcup_j X_j) \setminus X_i} (\mbox{max }(0, \delta-||E_i(x_k;\theta_i)-c_i||_2)^2)\\
    + \frac{\lambda}{2}\sum_{l=1}^{L} ||W^l||_F^2
\end{array}
\]
where $N = |X_1| + \cdots + |X_m|$ is the total number of normal class samples and $N_i = |X_i|$ is the number of examples from normal class $i$. $\delta, \eta$ and $\lambda$ are hyperparameters where $\eta$ controls the weight given to the {\em constrastive loss} term, $\delta$ is a distance from the identified center beyond which we would like to map points not in $X_i$, and $\lambda$ is a regularization parameter. The first term above attempts to learn a small sphere about the center for normal points in class $i$ while the second term attempts to maximize the distance of the remaining normal points (all except points in $X_i$) far away. The ensemble method in \cite{vyas2018eccv} also trains multiple classifiers and uses a mean of their softmax outputs for OOD detection. Unlike out work, they train on one dataset and test on another (recall the criticism from \cite{ahmed2020aaai} which notes that detecting anomalies within the same dataset is much harder than between dastasets). It is also unclear how well their method would scale down to few normal classes (say 2). 

\section{Empirical Evaluation} \label{results}

\begin{table*}[h]
    \caption{AUC range for CIFAR-10 (10 repetitions for each).}
\begin{center}
    \begin{small}
    \begin{tabular}{lccc} \hline
    &2-in, 8-out & 5-in, 5-out & 9-in, 1-out \\ \hline
    DROCC &$0.4728\longleftrightarrow 0.7252$ & $0.4316\longleftrightarrow 0.7219$ & $0.4107  \longleftrightarrow 0.7146$ \\
    &$\pm 0.0119 \hspace{0.3cm}\pm 0.0081$ & $\pm 0.0257 \hspace{0.3cm}\pm 0.0039$ & $ \pm 0.0454  \hspace{0.3cm}\pm 0.0079$ \\ \cline{2-2}
    Outlier&$(0,8)\hspace{0.3cm} 0.8359 \pm 0.0117$&&\\ \hline
    DROCC($m$) &$0.4216 \longleftrightarrow 0.6912$ & $0.3806 \longleftrightarrow 0.7023$ & $0.3439  \longleftrightarrow 0.6896$\\ 
    &$\pm 0.0424\hspace{0.3cm} \pm 0.0188$ & $\pm 0.0047 \hspace{0.3cm}\pm 0.0648$ & $ \pm 0.1034 \hspace{0.3cm} \pm 0.0453$\\ \cline{2-2}
    Outlier & $(0,8) \hspace{0.3cm}0.8255\pm 0.0137$\\ \hline
    DeepSVDD &$0.4088\longleftrightarrow 0.7623$ & $0.3382 \longleftrightarrow 0.7105$& $0.3058  \longleftrightarrow 0.6844$\\
    &$\pm 0.0068 \hspace{0.3cm}\pm 0.0193$ & $\pm 0.0076 \hspace{0.3cm}\pm 0.0077$& $ \pm 0.0169 \hspace{0.3cm}\pm 0.0144$\\ \hline
    DeepSVDD($m$) &$0.4147 \longleftrightarrow 0.7516$ & $0.3482\longleftrightarrow 0.6909$ & $0.3580 \longleftrightarrow 0.5864$\\
    &$\pm 0.0129 \hspace{0.3cm}\pm 0.0093$ & $\pm 0.0123\hspace{0.3cm} \pm 0.0133$ & $\pm 0.0166\hspace{0.3cm} \pm 0.0167$\\ \hline
     DeepMAD & {\bf 0.5396$\longleftrightarrow$ 0.7647} & {\bf 0.4929 $\longleftrightarrow$ 0.7738} & {\bf 0.5437 $\longleftrightarrow$ 0.7230}\\
    & $\pm 0.0031 \hspace{0.3cm}\pm 0.0014$ & $\pm 0.0046 \hspace{0.3cm}\pm 0.0022$ & $\pm 0.0028\hspace{0.3cm} \pm 0.0084$\\ \hline
    \end{tabular}
    \end{small}
\label{cifar10}
\end{center}
\end{table*}

We compare the performance of five algorithms -- DROCC, DROCC($m$), DeepSVDD, DeepSVDD($m$), and DeepMAD using four data sets: CIFAR-10, fMNIST, RECYCLE, and CIFAR-100. For DROCC and DeepSVDD we used the code provided by the authors while for the $m$ normal classes case we simply replicated the code and modified the main() function appropriately. For all the experiments with DROCC, we used parameters radius $r=8$, $\mu=1$, learning rate = 0.001, ascent step = 0.001 and the Adam optimizer. The reason for this choice is that the ablation study reported in \cite{goyal20icml} shows a fairly stable AUC value for this parameter setting across classes (though the specific value for $r$ that achieves optimal AUC for the single-class case can vary by class). The parameter settings for DeepSVDD used $\eta=1$, learning rate of 0.0001 and the Adam optimizer as well. We consider three multi-class anomaly detection cases for CIFAR-10 and fMNIST -- (2/8), (5/5), and (9/1) which correspond to 2, 5, 9 normal classes respectively. For CIFAR-100 we used the 20 super-classes and studied the (2/18) case. For the RECYCLE dataset (described below) we have 5 classes of recyclables and one that is assorted trash. We considered two cases: in one, we study the (4/1) case using only the recycles; we then study the case when the 5 recycles are all normal and the trash is the anomaly. In all cases we repeated the experiments with random seeds and report the 95\% confidence intervals of the achieved AUC values.
\begin{table*}[hbt!]
\caption{AUC range for fMNIST (10 repetitions each).}
\begin{center}
    \begin{small}
    \begin{tabular}{lccc} \hline
    &2-in, 8-out & 5-in, 5-out & 9-in, 1-out \\ \hline
    DROCC &$0.6873\leftrightarrow 0.9774$ & $0.5738\leftrightarrow 0.9260$ & $ 0.5408 \leftrightarrow 0.8247$ \\
    &$\pm 0.0937\pm 0.0049$ & $\pm 0.0397 \pm 0.0307$ & $ \pm  0.0961\pm 0.0507$ \\ 
    Mean & 0.8161 & 0.7448 & 0.6992 \\ \hline
   Deep &$0.6622\leftrightarrow 0.9871$ & $ 0.5438\leftrightarrow 0.9279$& $ 0.4551\leftrightarrow 0.8825$ \\
    SVDD&$\pm 0.0502\pm 0.0033$ & $\pm 0.0274 \pm 0.0325$& $ \pm 0.0285 \pm 0.0137$ \\ 
    Mean & {\bf 0.8538} & 0.7269 & 0.6523  \\\hline
   Deep & $0.6434\leftrightarrow 0.9714$ & $ 0.5732\leftrightarrow 0.8832$ & $0.4860 \leftrightarrow 0.9395$\\
    MAD& $\pm 0.0640\pm 0.0011$ & $\pm 0.0485\pm 0.0137$ & $\pm 0.0267 \pm 0.3466$ \\ 
    Mean & 0.8329 & {\bf 0.7739} & {\bf 0.7613}\\\hline
    \end{tabular}
    \end{small}
\end{center}
\label{auc-fmnist}
\end{table*}

\begin{table*}[hbt!]
\caption{AUC range for RECYCLE, and CIFAR-100 (10 repetitions each).}
\begin{center}
    \begin{small}
    \begin{tabular}{lcc|c} \hline
    & \multicolumn{2}{c}{RECYCLE} & CIFAR-100\\\hline
    & 4-in, 1-out &5-in, 1-out& 2-in, 18-out\\ \hline
    DROCC &0.4447 $\leftrightarrow$ 0.7997&0.9056&$0.3548 \leftrightarrow 0.7329$\ \\
    &$\pm 0.0176 \pm 0.0719$ &&$\pm 0.0006 \pm 0.0971$\ \\ 
    Mean &  0.6128 &&0.5638\\ \hline
   Deep &0.3703 $\leftrightarrow$ 0.8728 &0.9012&$0.4196\leftrightarrow 0.7185$\\
    SVDD&$\pm 0.0207 \pm 0.0079$ &&$\pm 0.0077 \pm 0.0180$\\ 
    Mean &  0.5791&&0.5559 \\\hline
   Deep &  0.5906 $\leftrightarrow$0.8283 &{\bf 0.9838}&$0.5384 \leftrightarrow 0.8213$\\
    MAD& $\pm 0.0035 \pm 0.0073$ &&$\pm 0.0018 \pm 0.0012$\\ 
    Mean & {\bf 0.6966} &&{\bf 0.6580}\\\hline
    \end{tabular}
    \end{small}
\end{center}
\label{auc-cifar100}
\end{table*}

\begin{figure*}[hbt!]
  \centering
  \includegraphics[width=.15\linewidth]{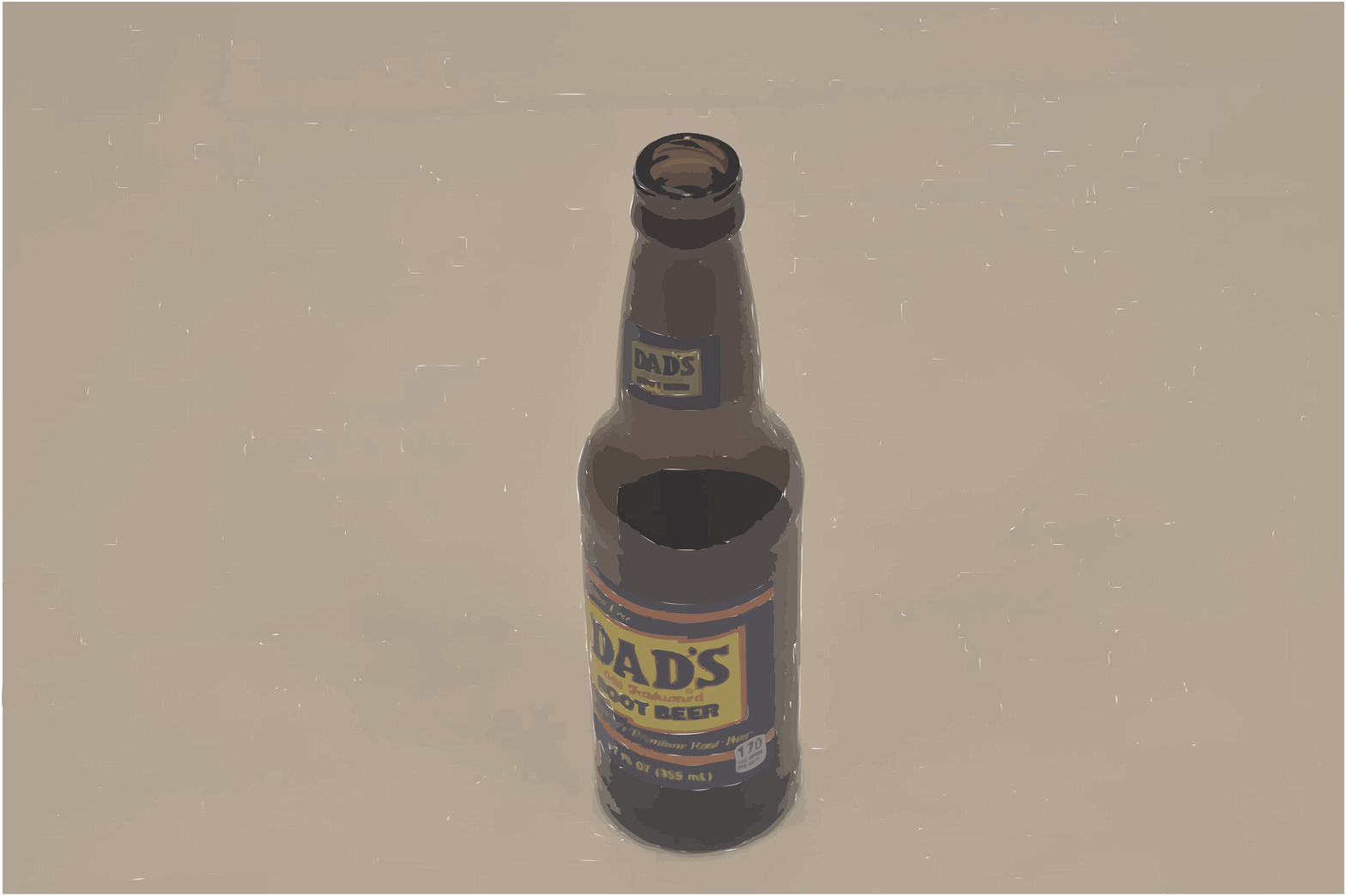}
  \centering
  \includegraphics[width=.15\linewidth]{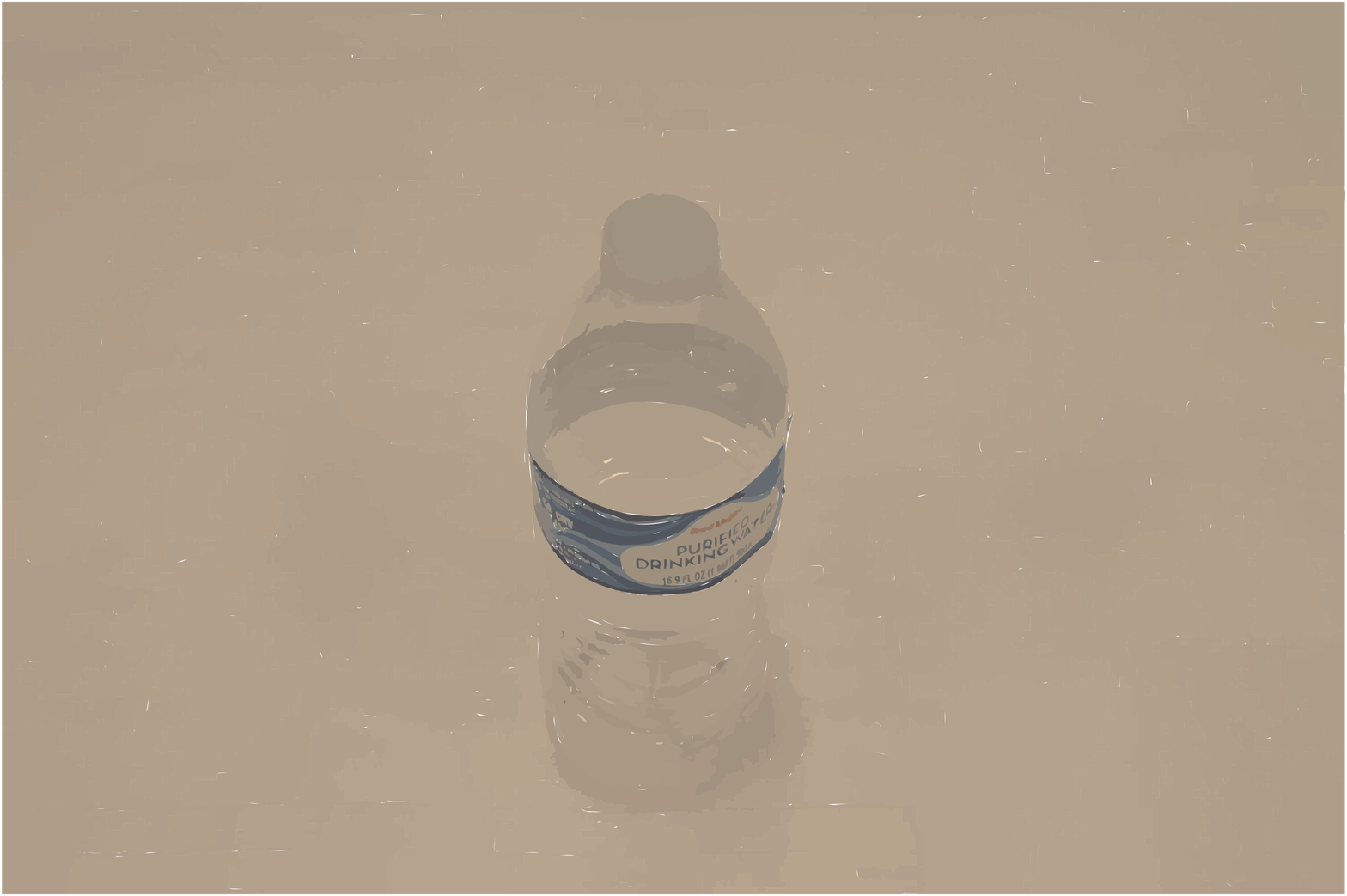}
  \includegraphics[width=.15\linewidth]{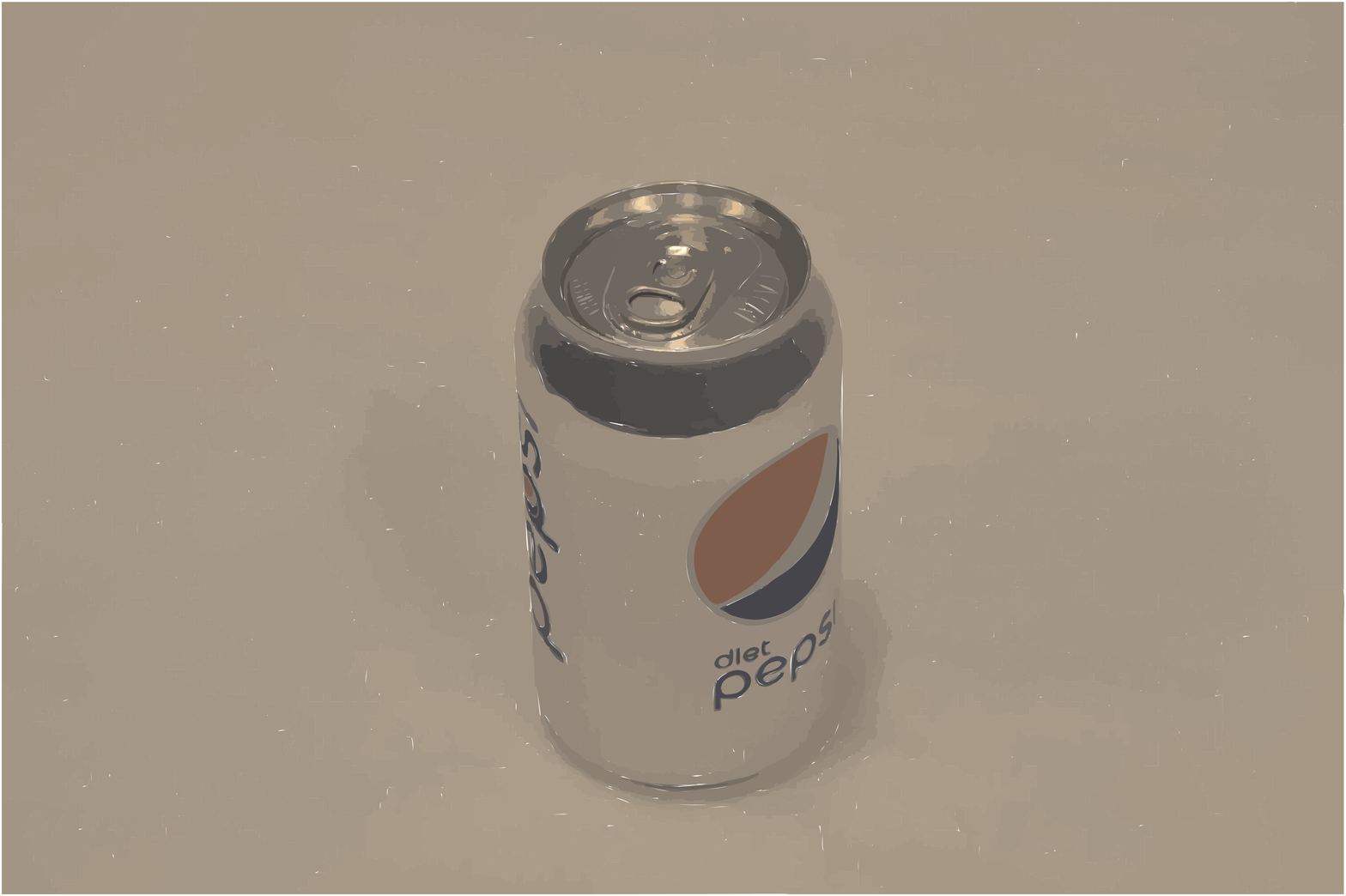}
  \centering
  \includegraphics[width=.15\linewidth]{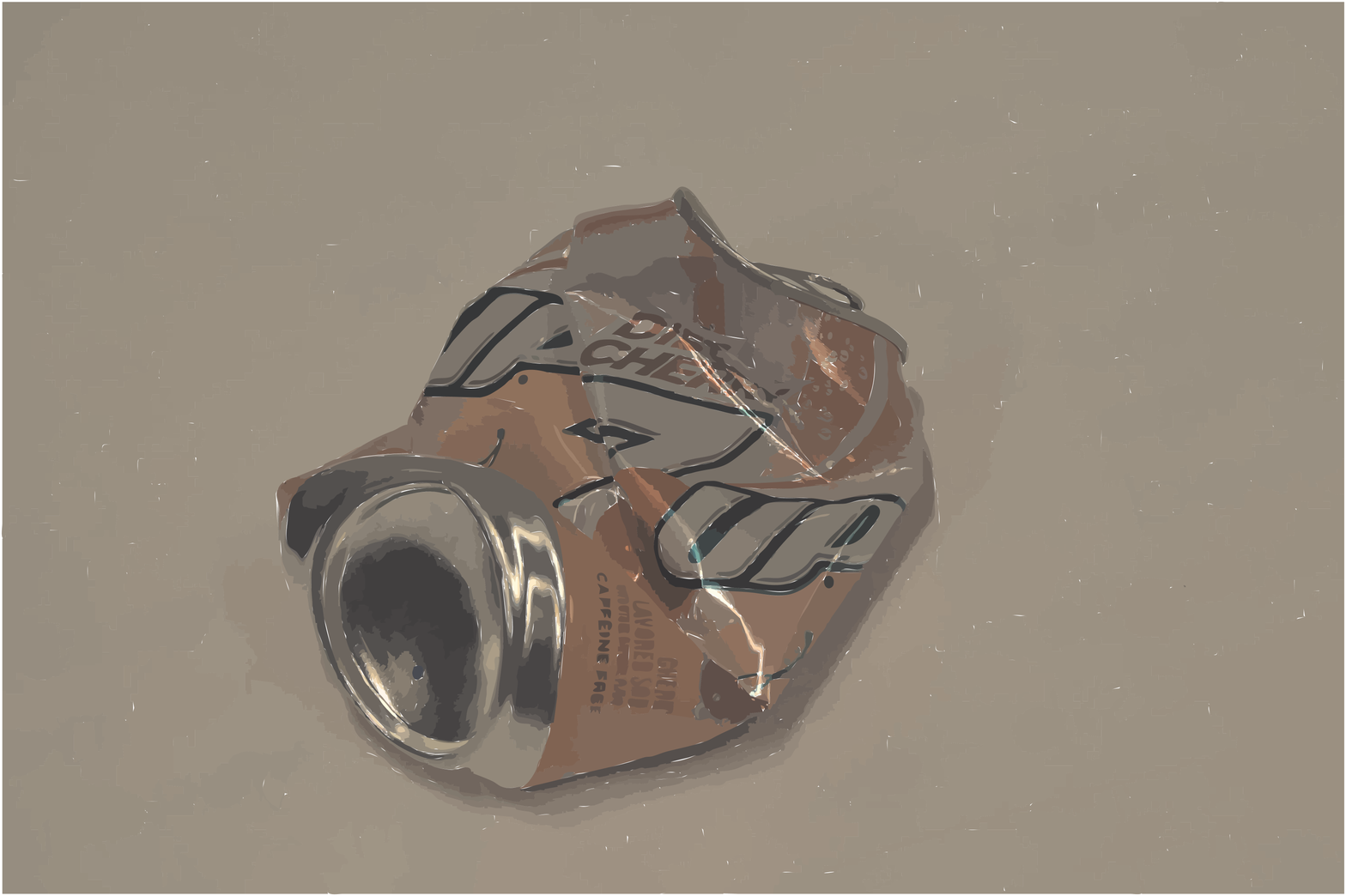}
  \centering
  \includegraphics[width=.15\linewidth]{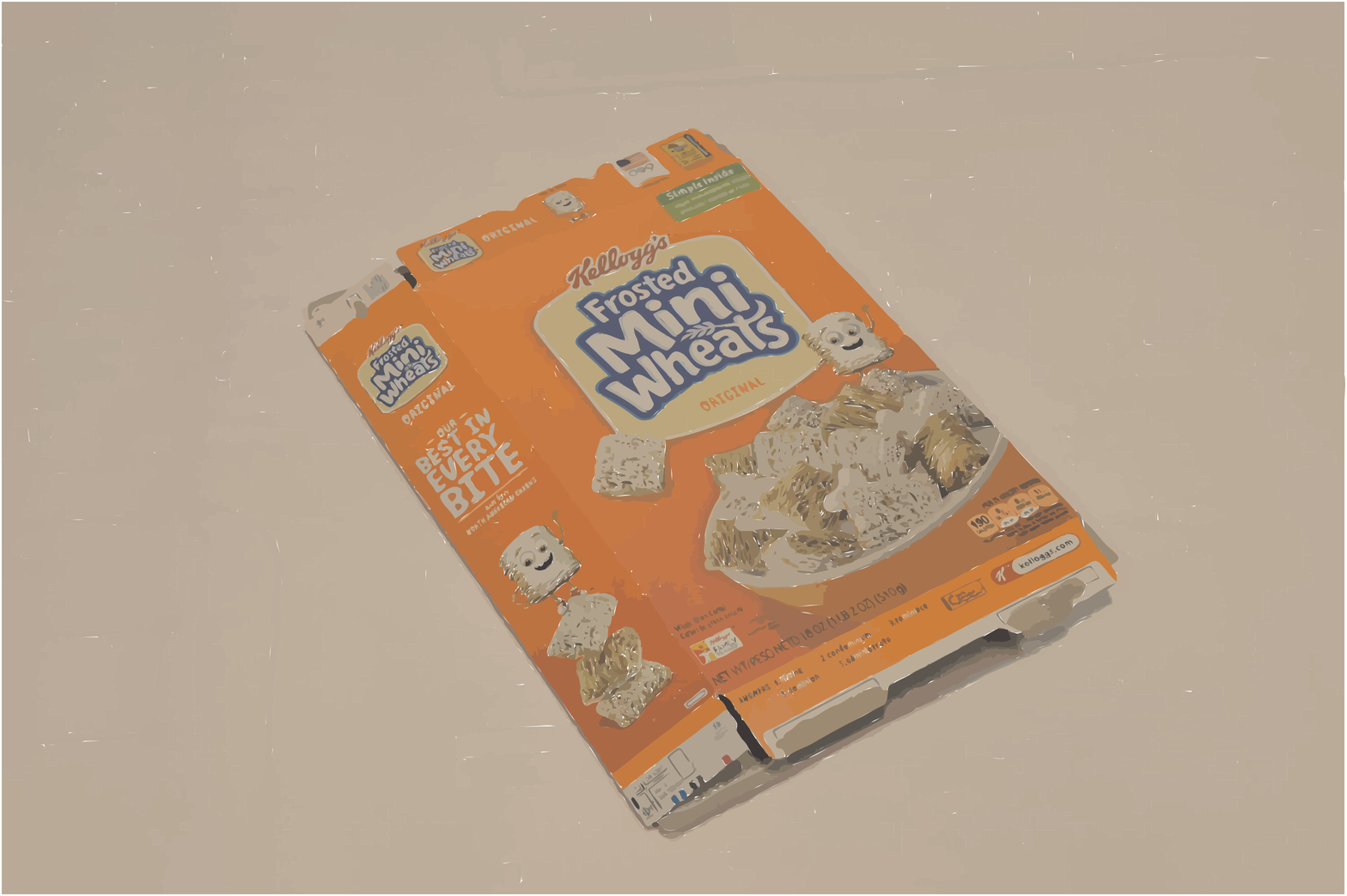}
    \caption{Examples of objects from the five (5) recycle classes.}
    \label{recycledataset}
\end{figure*}

\begin{figure*}[hbt!]
    \centering
  \centering
  \includegraphics[width=.15\linewidth]{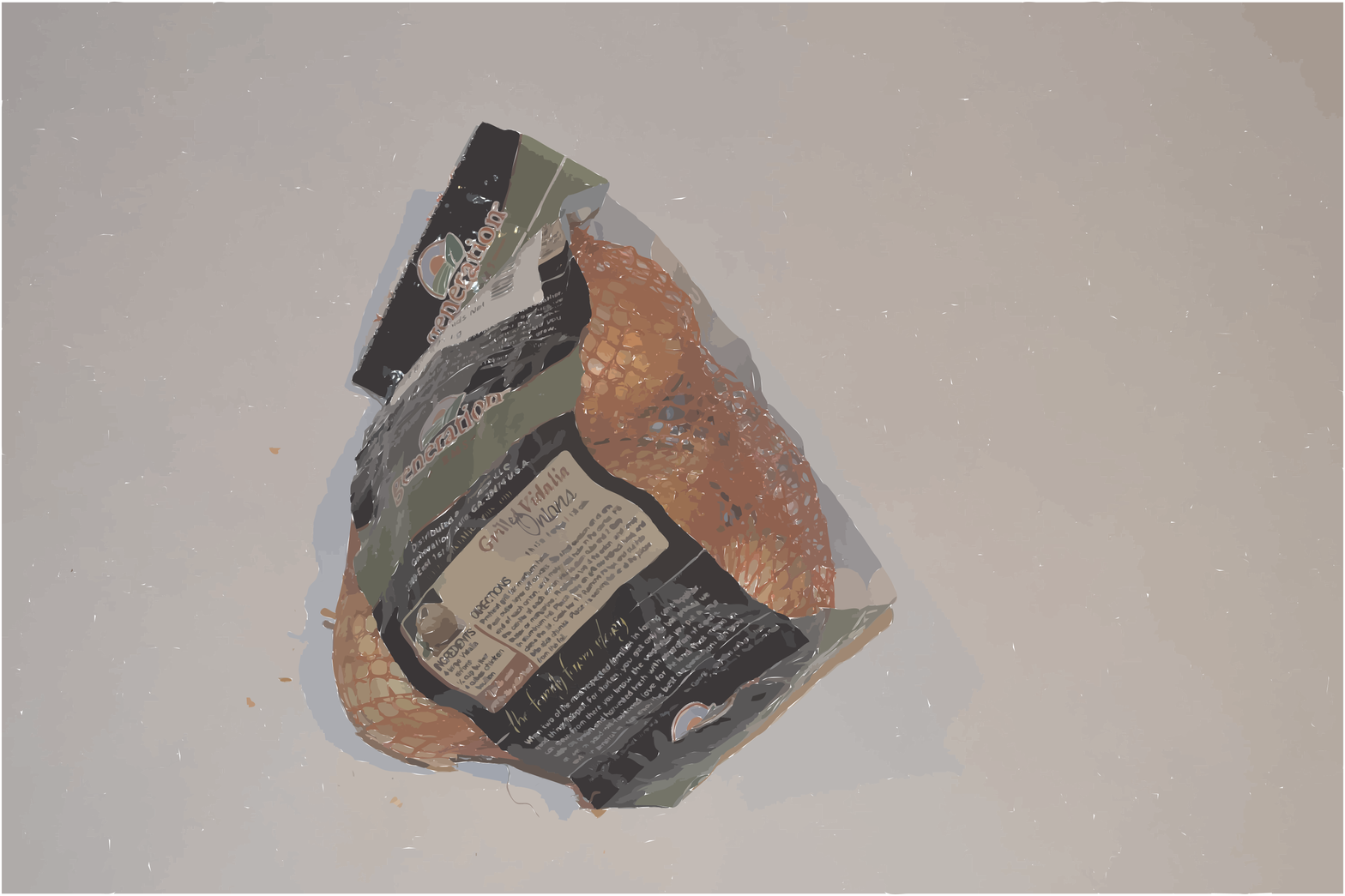}
  \centering
  \includegraphics[width=.15\linewidth]{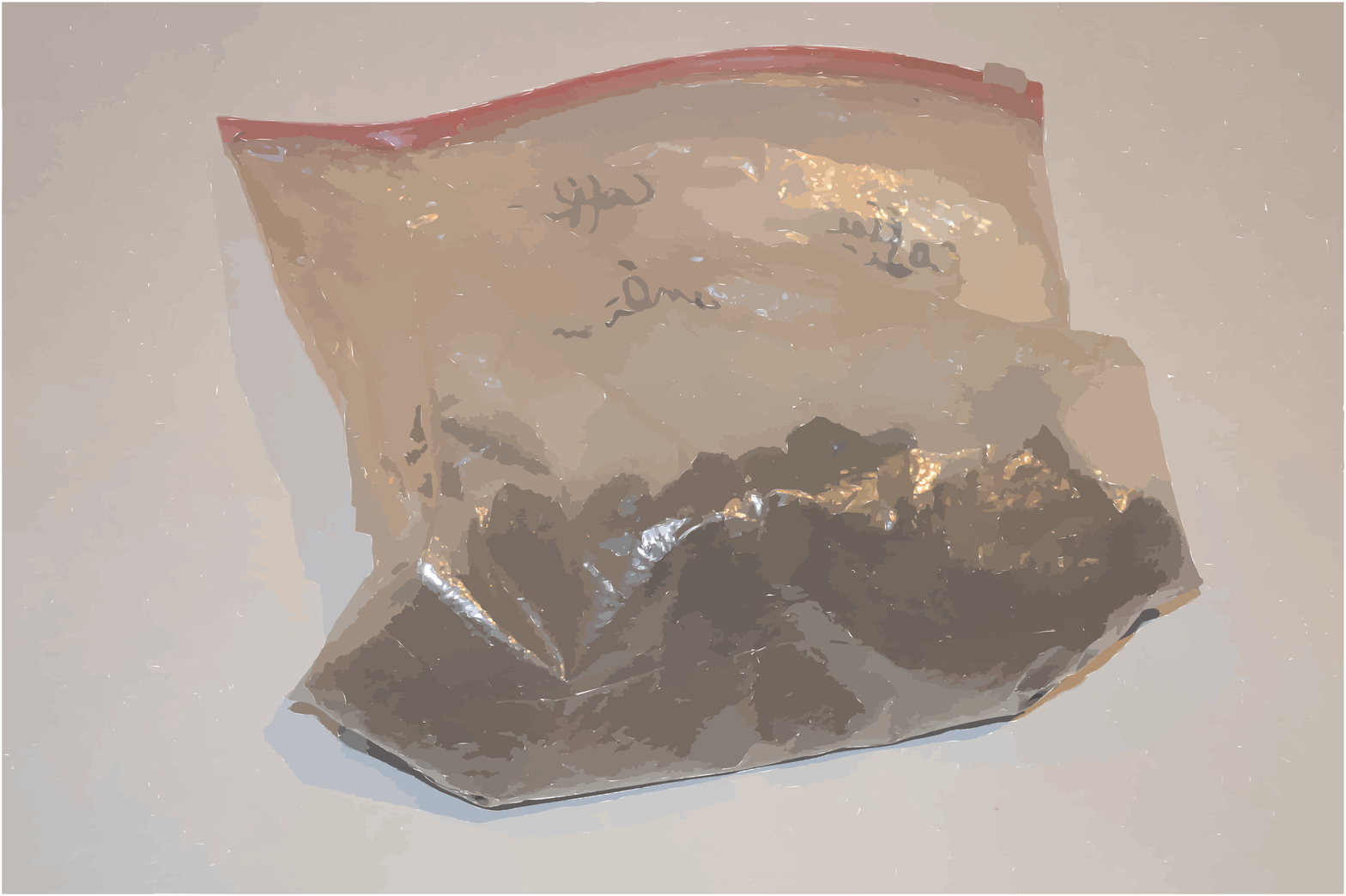}
  \centering
  \includegraphics[width=.15\linewidth]{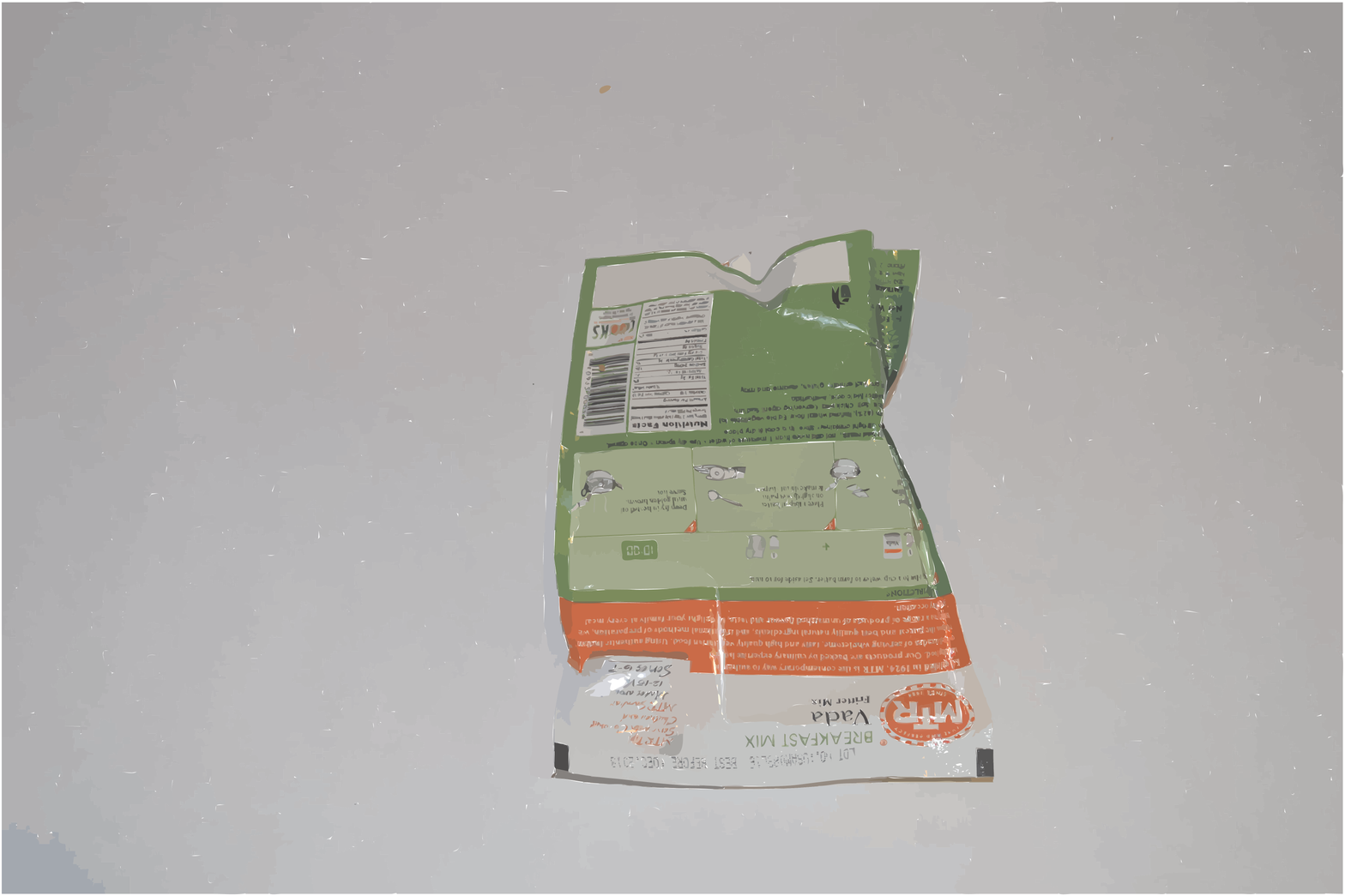}
  \centering
  \includegraphics[width=.15\linewidth]{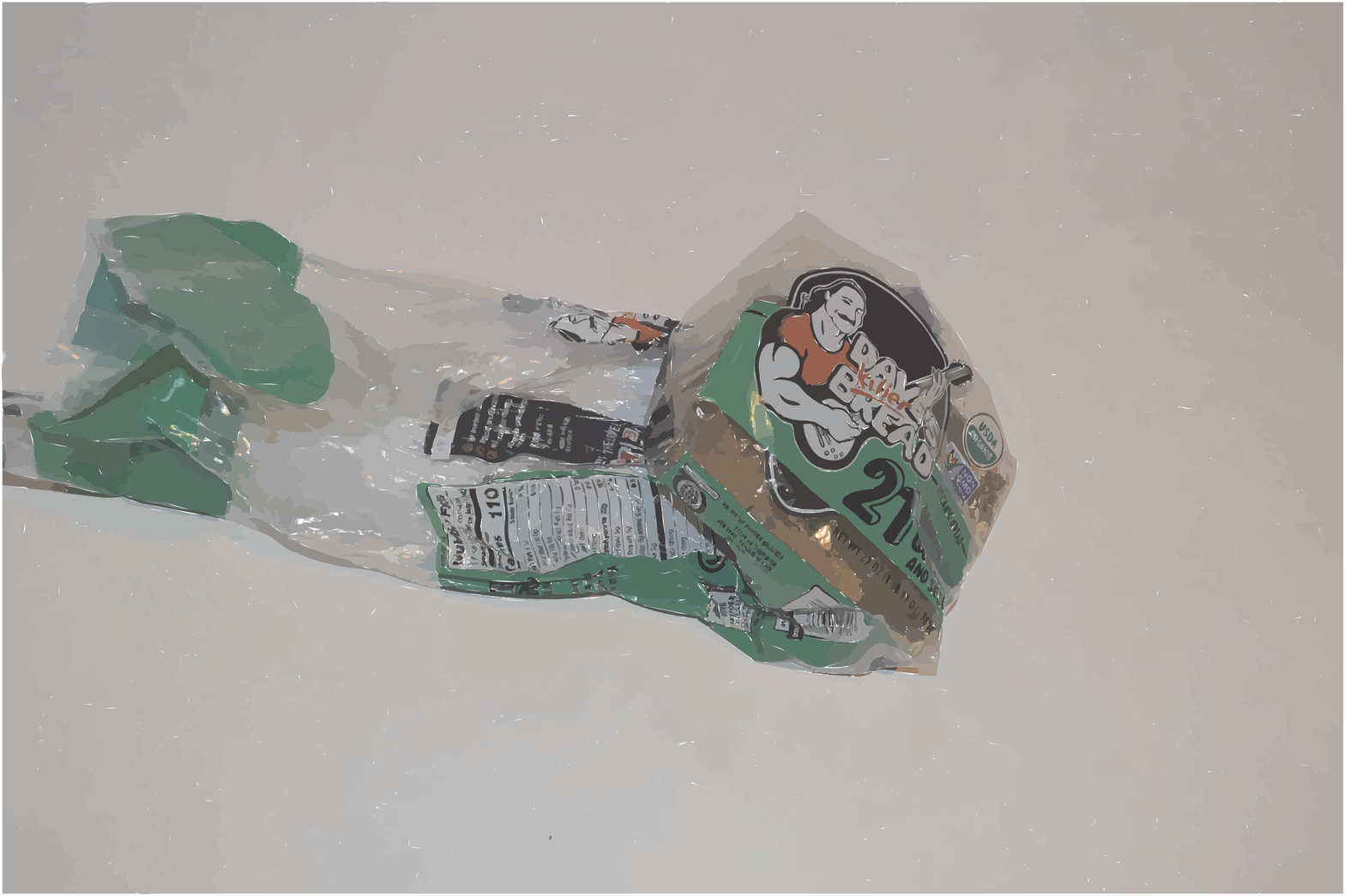}
  \centering
  \includegraphics[width=.15\linewidth]{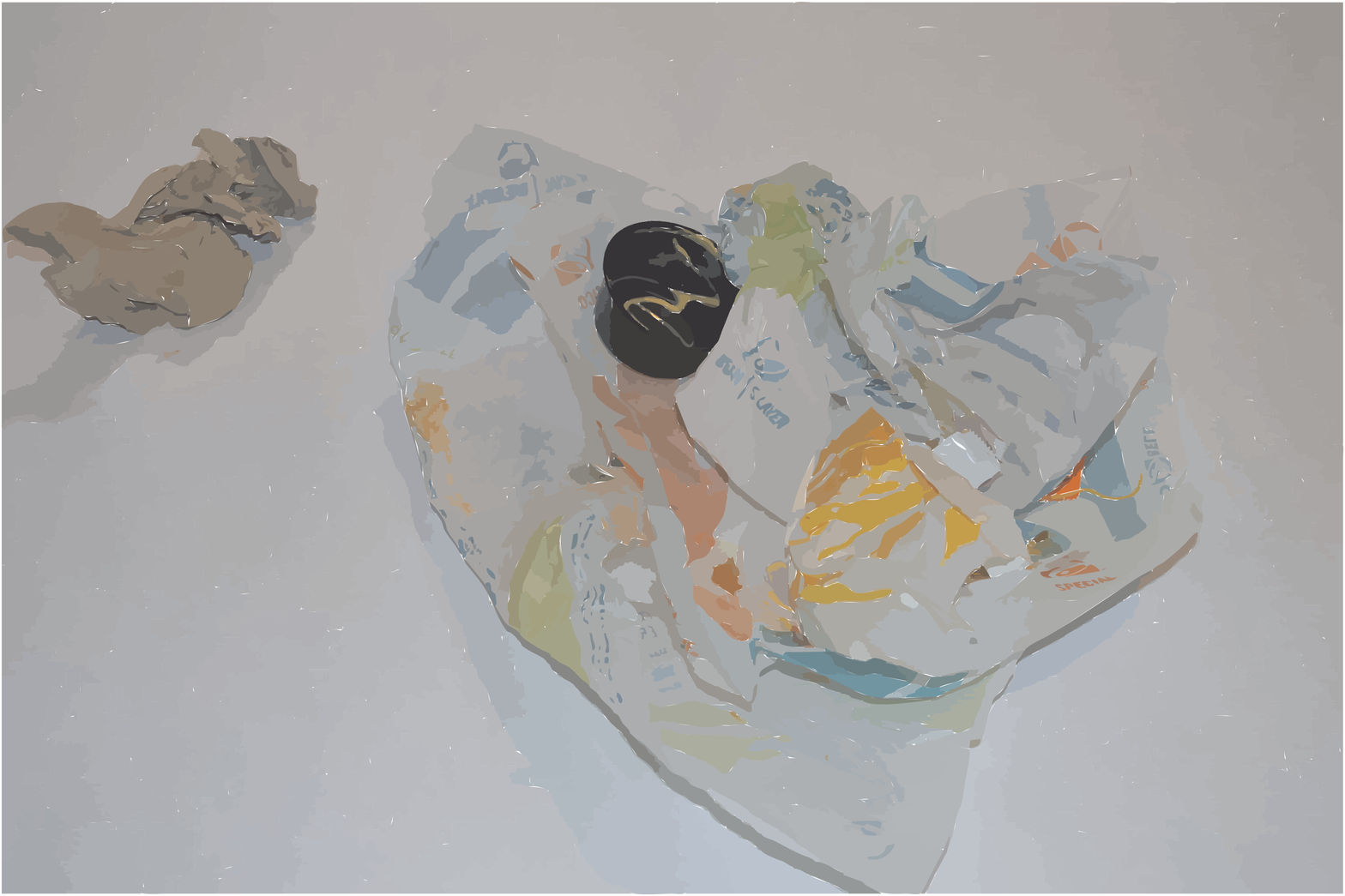}
    \caption{Examples of trash (images are all random trash and cannot be divided into classes).}
    \label{trashdataset}
\end{figure*}

The RECYCLE dataset contains 5 classes of recycles -- glass bottles, plastic bottles, cans, crushed cans, and cardboard boxes. The original images were 3008x2000x3 but we downsampled them to 32x32x3. There are a total of 11,000 images evenly divided among the five classes. 10,000 are used for training and 1,000 for testing. Samples of these images are shown in Figure \ref{recycledataset}. We also created 300 {\em trash} images to use for testing. Some examples are shown in Figure \ref{trashdataset}.


\begin{figure}[th]
\centerline
    {\includegraphics[width=0.5\columnwidth]{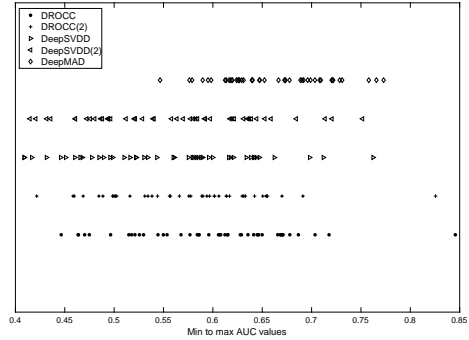}}
    \caption{2 in and 8 out case (CIFAR-10).}
        \label{cifar10-28}
\end{figure}

Table \ref{cifar10} summarizes the AUC values achieved by each of the five algorithms on CIFAR-10. For the (2/8) case, there are 45 combinations, for (5/5) there are 252 and for (9/1) there are ten. In the table we report on the range of values with their confidence intervals. For all three cases we note that DeepMAD has the best AUC values (with the exception of an outlier for DROCC). But more importantly, observe that DROCC performs better than DROCC($m$) and DeepSVDD performs better than DeepSVDD($m$), which is what the theoretical results from the previous section predicted. Figure \ref{cifar10-28} provides a scatter plot of all 45 combinations for the five algorithms for the case when two classes are normal. It is clear that which classes are considered normal has a huge impact on AUC values. For all but DeepMAD, some cases have an AUC value below 0.5 which is very poor. Its noteworthy that the case when classes 0 and 8 are normal DROCC has a very high AUC value (labeled an outlier in the table). Figure \ref{cifar10-55} plots the AUC values for the ten best and ten worst cases (out of 252) for when 5 classes are normal. The specific class combinations that have the best and worst values are different for the algorithms (details are in the supplementary materials). Aside from the fact that DeepMAD performs better, it also shows considerably low variance. This points to it learning very good discriminating features. 

We ran DROCC, DeepSVDD, and DepMAD on the fMNIST, RECYCLE, and CIFAR-100 datasets as well. We did not run DROCC($m$) and DeepSVDD($m$) because, as we have seen, training over all classes jointly performs better. Tables \ref{auc-fmnist}, \ref{auc-cifar100} provides AUC values for CIFAR-100 where we see that DeepMAD has about 10\% improvement over the other algorithms. In this case we considered the 20 super-classes and considered just a single case when 2 of the 20 classes are considered normal.  The tables also provides AUC values for fMNIST. Here the picture is more mixed. For the case when 2 classes are normal, all algorithms are very similar with DeepSVDD holding a 2\% edge when averaged over 45 combinations. DeepMAD performs better on average for the other two cases. For the RECYCLE dataset, DeepMAD performs well above the other two algorithms. The (4/1) case is when we only consider the five recycle classes and use four of them as defining normal with the fifth being the anomaly. We also considered the case when the five recycle classes were all normal and {\em trash} was considered as the anomaly. This is the (5/1) case in the table. All algorithms perform considerably better because trash images are very different from recycles. However, DeepMAD is about 8\% better than the other two.

\begin{figure}[th]
  \centerline
    {\includegraphics[width=0.5\columnwidth]{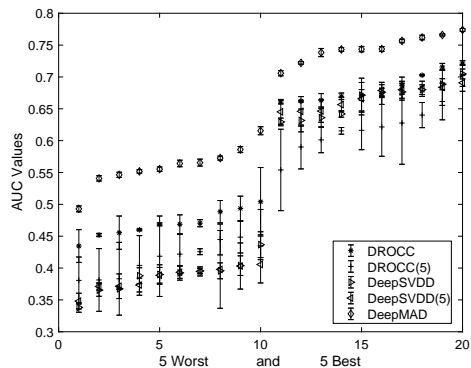}}
    \caption{5 in 5 out case (CIFAR-10).}
    \label{cifar10-55}
\end{figure}

\section{Conclusions}

The problem of multi-class anomaly detection occurs naturally in many settings, making it an important model to study. In this paper we adapt two one-class anomaly detection algorithms to the multi-class setting and develop a new algorithm DeepMAD. Our paper reports two significant results. First, we show that jointly learning a classifier (using single-class anomaly detection algorithms) for all the normal classes provides higher classification accuracy as compared to using several single-class classifiers. And second, we show that using supervised learning by treating $m-1$ out of $m$ normal classes as anomalous results in much compact representations and hence uniformly higher accuracy. We compare our algorithm against two recent single-class classifiers adapted to the multi-class scenario and show improvements in AUC values for almost all cases for CIFAR-10, fMNIST, and and CIFAR-100. However, we still see AUC values below 0.7 for many cases and addressing this constitutes a major thrust of future work. We also take a first step in analyzing the feature vectors as a way to understand where the algorithms fail.  We hope that the methodology we use for this analysis of feature vectors can be extended by others. Our code is available at \url{https://anonymous.4open.science/r/DeepMAD-16D4/}

\bibliographystyle{splncs04}
\bibliography{recycle-refs}

\end{document}